\theoremstyle{plain}
\newtheorem{theorem}{Theorem}[section]
\newtheorem{proposition}[theorem]{Proposition}
\theoremstyle{definition}
\theoremstyle{remark}
\title{Energy-Based Transfer for Reinforcement Learning}
\author{
\textbf{Zeyun Deng}$^\spadesuit$ \quad
\textbf{Jasorsi Ghosh}$^\spadesuit$ \quad
\textbf{Fiona Xie}$^\spadesuit$ \\
\textbf{Yuzhe Lu}$^\diamondsuit$ \quad
\textbf{Katia Sycara}$^\clubsuit$ \quad
\textbf{Joseph Campbell}$^\spadesuit$ \\
$^\spadesuit$ Purdue University \quad
$^\diamondsuit$ AWS AI \quad
$^\clubsuit$ Carnegie Mellon University 
}
\begin{document}

\newpage

\maketitle

\begin{abstract}
Reinforcement learning algorithms often suffer from poor sample efficiency, making them challenging to apply in multi-task or continual learning settings.
Efficiency can be improved by transferring knowledge from a previously trained teacher policy to guide exploration in new but related tasks.
However, if the new task sufficiently differs from the teacher’s training task, the transferred guidance may be sub-optimal and bias exploration toward low-reward behaviors.
We propose an energy-based transfer learning method that uses out-of-distribution detection to selectively issue guidance, enabling the teacher to intervene only in states within its training distribution.
We theoretically show that energy scores reflect the teacher’s state-visitation density and empirically demonstrate improved sample efficiency and performance across both single-task and multi-task settings.
\end{abstract}

\section{Introduction}

Reinforcement learning (RL) is a powerful tool for sequential decision-making~\cite{sutton1998reinforcement, li2017deep, 
arulkumaran2017deep}, but credit assignment, sparse rewards, and modeling errors makes it notoriously sample inefficient.
This is limiting in multi-task or continual learning settings, where agents must repeatedly learn to solve new tasks -- particularly when those tasks are related to ones they have seen before.
A natural question arises: \textit{can we transfer knowledge from previously solved tasks to accelerate learning in new ones}?

One common approach to transfer is to reuse a previously trained teacher policy to guide a student policy’s exploration in a new task, either directly (by suggesting actions~\cite{campbell2023introspective, uchendu2023jump,maclin1994incorporating, guo2023explainable, pmlr-v274-guo25a}) or indirectly (by shaping rewards~\cite{brys2015policy,harutyunyan2015expressing}).
This form of transfer learning can be highly effective: early in learning, even partial guidance can steer the student toward high-reward behaviors and minimize the need for random exploration.
However, when tasks are sufficiently different this approach can impair the student’s ability to learn; the teacher may issue sub-optimal guidance that biases exploration towards low-reward regions of the state-action space~\cite{taylor2009transfer}.

\textbf{In this paper, we introduce an introspective transfer learning method that selectively guides exploration only when the teacher’s knowledge is likely to be helpful.}
Our approach, \textit{energy-based transfer learning} (EBTL), is based on the insight that guidance should only be issued when the student visits states that lie within the teacher’s training distribution.
Leveraging concepts from energy-based learning~\cite{lecun2006tutorial,haarnoja2017reinforcement} and out-of-distribution detection~\cite{liu2020energy,
yang2024generalized,
ren2019likelihood, lu2023characterizing}, the teacher computes energy scores over states visited by the student during training, treating high-energy states as  in-distribution and therefore eligible for guidance.
This mechanism enables the teacher to act only when it is sufficiently ``familiar'' with the current context, leading to more efficient training -- not by issuing \textit{more} guidance but by issuing \textit{correct} guidance.

Our contributions are as follows:
\begin{itemize}
\item We introduce an energy-based transfer learning method that selectively guides exploration only when the student’s state lies within the teacher’s training distribution.
\item We provide theoretical justification for our approach, showing that the energy score is proportional to the state visitation density induced by the teacher policy.
\item We empirically demonstrate that our method yields more sample efficient learning and higher returns than standard reinforcement learning and transfer learning baselines, across both single-task and multi-task settings.
\end{itemize}

\begin{figure}[t]
    \centering
    \includegraphics[width=1\linewidth]{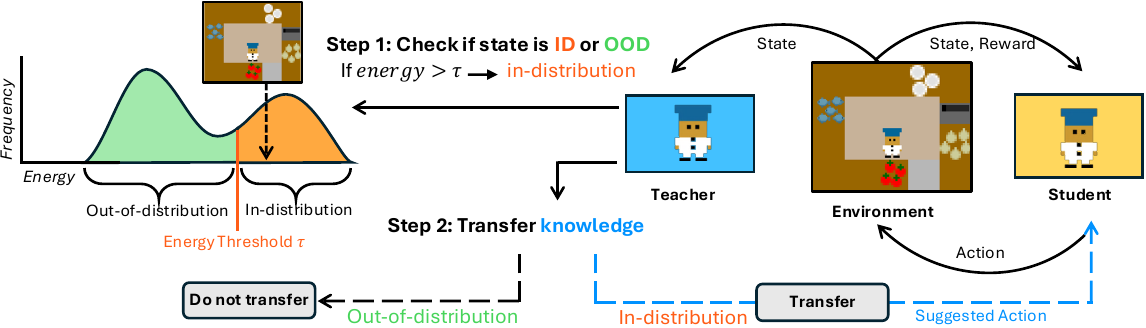}
    \caption{Overview of \textbf{energy-based transfer learning}. As the student interacts with the environment, the teacher: 1) checks if each state is in-distribution or out-of-distribution by comparing the state's energy score to a pre-defined \textit{energy threshold}; 2) If the state is greater than the \textit{energy threshold}, then it is considered in-distribution for the teacher and an expert action is suggested to the student.}
    \label{fig:overview}
\end{figure}

\section{Related Work}

Reinforcement learning provides a general framework for sequential decision-making, where an agent interacts with an environment to learn a policy that maximizes expected long-term reward~\cite{sutton1998reinforcement}. However, RL methods typically require large amounts of experience, making them sample-inefficient -- especially in sparse-reward or high-dimensional environments~\cite{andrychowicz2017hindsight,portelas2020automatic}.

To improve sample efficiency, transfer learning reuses knowledge acquired from prior tasks to accelerate learning in new ones~\cite{weiss2016survey}.
In RL, this is commonly realized through a teacher-student paradigm, where a teacher policy trained on a source task provides guidance to a student learning a related target task~\cite{taylor2009transfer, zhu2023transfer}.
This guidance can take various forms, including action suggestions ~\cite{torrey2013teaching}, 
reward shaping~\cite{ng1999policy}, 
and policy initialization ~\cite{ju2022transferring}.
Among these, parameter-based methods such as fine-tuning are the simplest, directly initializing the student's parameters from the pretrained teacher policy and then adapting them through further training on the target task~\cite{yosinski2014transferable, ozturk2023transfer}.
However, parameter initialization alone can overly bias the student toward exploiting teacher behaviors, limiting exploration and resulting in suboptimal performance when the tasks differ.

Alternatively, behavior-based transfer methods explicitly encourage the student to mimic the teacher's policy during training.
One widely-used approach is \textit{policy distillation}, where an auxiliary loss term encourages the student to replicate the teacher's actions by minimizing divergence between the policies~\cite{rusu2015policy, schmitt2018kickstarting}.
In contrast, in \textit{action advising} the teacher actively guides the student's exploration by suggesting actions during training.
A key challenge is determining when guidance should be issued, as early work showed that poorly timed advice can hinder learning~\cite{torrey2013teaching}.
To mitigate this, recent work has explored dynamic advising strategies.
\textit{JumpStart RL}~\cite{uchendu2023jump} restricts guidance to a pre-defined episode prefix (i.e. the first $h$ steps of an episode), while \textit{introspective action advising}~\cite{campbell2023introspective} uses deviations in the teacher's expected reward to decide when to intervene. 

However, prior methods rely on pre-defined heuristics, hyperparameters, or brittle fine-tuning strategies which limit their generalizability.
Our method addresses this gap by applying theoretically-grounded out-of-distribution detection to reliably estimate teacher familiarity with a given state, enabling positive transfer performance even between tasks with high degrees of covariate shift.

\section{Background}

\paragraph{Reinforcement Learning.}
We model our setting as a Markov Decision Process (MDP), defined by the tuple \((\mathcal{S}, \mathcal{A}, P, R, \gamma)\), where \(\mathcal{S}\) is the set of states, \(\mathcal{A}\) is the set of actions, \(P(s' \mid s, a)\) denotes the transition probability from state \(s\) to state \(s'\) given action \(a\), \(R(s, a)\) is the reward function, and \(\gamma \in [0,1)\) is the discount factor.
At each timestep \(t\), the agent observes a state \(s_t \in \mathcal{S}\), selects an action \(a_t \in \mathcal{A}\), transitions to a new state \(s_{t+1} \sim P(\cdot \mid s_t, a_t)\), and receives a reward \(r_t = R(s_t, a_t)\).
We consider the infinite-horizon setting, where our objective is to learn a policy \(\pi(a \mid s)\) that maximizes the expected discounted return:
$\mathbb{E}_{\pi} \left[ \sum_{t=0}^{\infty} \gamma^t r_t \right]$.

\paragraph{Energy-Based Out-of-Distribution Detection.}
In this paper, we are interested in determining whether a state lies within the training distribution of a given policy.
In supervised learning, this is broadly referred to as out-of-distribution (OOD) detection.
A widely-used baseline for OOD detection uses the maximum softmax probability assigned to a predicted label~\cite{hendrycks2016baseline}.
However, softmax scores are not always reliable as neural networks can produce overconfident predictions for OOD states~\cite{nguyen2015deep}.
An alternative approach is to use the scalar \textit{energy} of a state, which is computed from the raw logits of a network and has been shown to better separate in- and out-of-distribution examples~\cite{liu2020energy}.

Formally, given an input \( \mathbf{x} \in \mathbb{R}^D \) and a neural network \( f(\mathbf{x}) \in \mathbb{R}^K \) with logits \( f_1(\mathbf{x}), \dots, f_K(\mathbf{x}) \), we define the \textit{free energy} for \( \mathbf{x} \) as follows, where \( T > 0 \) is a temperature parameter controlling the sharpness of the scaled logits:
\begin{equation}
E(\mathbf{x}; f) = -T \log \sum_{i=1}^{K} e^{f_i(\mathbf{x})/T}.
\label{eq:energy}
\end{equation}
An input is considered to be OOD if $E(\mathbf{x};f) > \tau$ for an \textit{energy threshold} $\tau$ and in-distribution (ID) otherwise.
The energy threshold is pre-computed over a set of in-distribution data.

\section{Energy-Based Transfer Learning}

Our goal is to improve the sample efficiency of reinforcement learning, which is particularly important in multi-task settings where the agent must learn to solve many (potentially related) tasks.
One way to improve sample efficiency is to leverage a teacher policy trained on a related source task to guide the student in a new target task.
However, naively applying teacher guidance can degrade sample efficiency if the student visits states outside the teacher's training distribution, potentially biasing exploration toward uninformative or low-reward regions of the state-action space.

To address this, we propose a transfer learning framework in which the teacher suggests actions to the student only in states sufficiently close to the teacher's training distribution.
We formalize the problem of when to issue guidance as \textit{out-of-distribution detection for reinforcement learning}.

\textbf{Problem Formulation.} \label{sec:problem_formulation}
Let \( \pi_T \) and \( \pi_S \) denote the teacher and student policies, respectively. We denote a trajectory as \( X = \{x_t\}_{t=1}^n \), where each transition \( x_t = (s_t, a_t, s_{t+1}, r_t) \) consists of the state \( s_t \), action \( a_t \), next state \( s_{t+1} \), and reward \( r_t \).
We define a score function \( \phi(s; \pi) \), where a state $s$ is considered ID with respect to a policy $\pi$ if $\phi(s) \geq \tau$, for some threshold $\tau \in \mathbb{R}$, and OOD otherwise.
The action selection rule is then defined as:
\begin{equation}
\label{eq:action_rule}
\begin{aligned}
a = 
\begin{cases}
a_T \sim \pi_T(\cdot \mid s), & \text{if } \phi(s; \pi_T) \geq \tau, \\
a_S \sim \pi_S(\cdot \mid s), & \text{if } \phi(s; \pi_T) < \tau.
\end{cases}
\end{aligned}
\end{equation}
Equation~\ref{eq:action_rule} restricts teacher intervention to states sufficiently close to those it has seen during training, deferring to the student policy in all other cases.

\subsection{Energy Scores and State Visitation}
\label{sec:energy-visitation}
We draw inspiration from recent work on energy-based out-of-distribution detection~\cite{liu2020energy} and define our score function as the negative free energy of a state $s$ under the teacher policy:
\[
\phi(s;\pi_T) = -E(s;\pi_T),
\]
where \( E(s;\pi_T) \) is the free energy computed from the teacher’s network. We refer to \( \phi(s;\pi_T) \) as the energy score, which serves as a proxy for how likely the state is to belong to the teacher's training distribution \( p(x) \). In on-policy reinforcement learning, training data is generated by rolling out the teacher policy \( \pi_T \) to collect experience. As a result, \( p(x) \) is implicitly defined by the state-visitation distribution \( d_{\pi}(s) \) of the teacher.
Consequently, the free energy \( E(s;\pi_T) \) is negatively related with the teacher's familiarity with a state -- assigning lower values to frequently visited states and higher values to unfamiliar ones. 
Following convention~\cite{liu2020energy}, we set
the energy score $\phi$ to the \textit{negative} free energy so that in-distribution states yield higher scores than out-of-distribution states.

\begin{proposition}
Under on-policy training, let $d_{\pi}(s)$ denote the state-visitation distribution induced by policy $\pi$. Then the log of the visitation density is proportional to the score function $\phi(s) = -E(s)$:
\[
\log d_{\pi}(s) \propto \phi(s).
\]
\end{proposition}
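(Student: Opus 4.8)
The plan is to borrow the ``a classifier is secretly an energy-based model'' viewpoint and specialize it to the teacher's policy network. First I would observe that a softmax policy head with logits $f_1(s),\dots,f_K(s)$ can be reinterpreted as defining an \emph{unnormalized joint} density over state--action pairs, $\tilde p(s,a) = e^{f_a(s)/T}$, with partition function $Z = \int_{\mathcal{S}}\sum_{a} e^{f_a(s)/T}\,ds$. This reinterpretation is consistent with the conditional the network actually parameterizes, since $\tilde p(s,a)/\sum_{a'}\tilde p(s,a') = \mathrm{softmax}(f(s)/T)_a = \pi_T(a\mid s)$.

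Next I would marginalize out the action. By the definition of free energy in Equation~\ref{eq:energy},
\[
\sum_{a} \tilde p(s,a) = \sum_{a} e^{f_a(s)/T} = e^{-E(s;\pi_T)/T},
\]
so the induced marginal density over states is $p(s) = \tfrac{1}{Z}\,e^{-E(s;\pi_T)/T}$. Taking logarithms yields $\log p(s) = -\tfrac{1}{T}E(s;\pi_T) - \log Z = \tfrac{1}{T}\,\phi(s;\pi_T) - \log Z$; that is, $\log p(s)$ is an increasing affine function of the energy score, with positive slope $1/T$ and an $s$-independent intercept $-\log Z$.

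The remaining step --- and the one I expect to be the crux --- is to identify this implicit marginal $p(s)$ with the teacher's state-visitation distribution $d_{\pi}(s)$. Under on-policy training the network is only ever exposed to states drawn from rollouts of $\pi_T$, i.e.\ from $d_{\pi}$, so I would invoke the energy-based-model interpretation of the trained network (its implicit input density matches the distribution it was fit on) to set $p(s) = d_{\pi}(s)$. Substituting into the previous display gives $\log d_{\pi}(s) = \tfrac{1}{T}\,\phi(s;\pi_T) - \log Z$, which is the claimed relation once the additive $\log Z$ and the positive scale $1/T$ are absorbed into ``$\propto$''.

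I would flag explicitly that this last identification is the soft spot: standard policy-gradient objectives supply no loss term that explicitly shapes $\sum_a e^{f_a(s)/T}$ as a function of $s$, so the equality $p(s) = d_{\pi}(s)$ should be read as holding heuristically, in the idealized well-trained limit, and ``$\propto$'' as ``equal up to an $s$-independent affine transformation'' rather than strict proportionality. If a fully rigorous statement were desired, I would instead prove only the order-preserving consequence $\log d_{\pi}(s_1) \ge \log d_{\pi}(s_2) \iff \phi(s_1;\pi_T) \ge \phi(s_2;\pi_T)$, since that monotone correspondence is all the thresholding rule in Equation~\ref{eq:action_rule} actually relies on.
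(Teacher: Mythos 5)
Your argument follows the same route as the paper's: write the implicit state density of the energy-based model as $p(s) = e^{-E(s)/T}/Z$, take logarithms, and identify $p(s)$ with the on-policy state-visitation distribution $d_\pi(s)$. The extra detail you supply --- deriving the state marginal by summing the unnormalized joint $e^{f_a(s)/T}$ over actions, and flagging the identification $p(s) = d_\pi(s)$ as the heuristic crux --- only makes explicit what the paper's proof asserts more tersely, so your proposal is correct in the same sense and to the same degree as the paper's own argument.
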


\begin{proof}
Given an energy-based model $f$, the density $p(s)$ is defined in terms of its energy $E(s)$~\cite{lecun2006tutorial}:
\begin{equation*}
    p(s; f) = \frac{e^{-E(s;f)/T}}{Z}, \text{ where } Z = \int_s e^{-E(s;f)/T} \text{ is the partition function and $T$ is the temperature}.
\end{equation*}
Ignoring the normalizing constant $Z$ and taking the logarithm of both sides we obtain:
\begin{equation*}
    \log p(s) \propto -E(s).
\end{equation*}
In on-policy RL, training data is collected by sampling trajectories under the current policy $\pi$.
Thus, the empirical distribution $p(s)$ corresponds to the marginal distribution over states visited by $\pi$ -- the state-visitation distribution $d_\pi(s)$.
Substituting this into the equation above, we obtain:
\begin{equation*}
    \log d_\pi(s) \propto -E(s) = \phi(s).
\end{equation*}
\end{proof}

\begin{algorithm}[H]
\caption{Energy-Based Transfer for Reinforcement Learning}
\label{alg:ebtl}
\begin{algorithmic}
\State \textbf{Input:} Teacher policy $\pi_T$, student policy $\pi_S$, energy threshold $\tau$, decay function $\delta$
\While{not done}
    \State Initialize empty batch $B \leftarrow \emptyset$
    \For{$t = 1 \rightarrow H$}
        \State Sample \( p \sim \mathcal{U}(0, 1) \) \Comment{Sample probability of issuing guidance}
        \State $a_t \gets
        \begin{cases}
            \pi_T(a \mid s_t) & \text{if }  -E(s_t; \pi_T) \geq \tau \text{ and } p < \delta(t) \hspace{4.6em} \text{\Comment{If $s_t$ is in-distribution}}\\
            \pi_S(a \mid s_t) & \text{if }  -E(s_t; \pi_T) < \tau \hspace{8.3em} \text{\Comment{If $s_t$ is out-of-distribution}}
        \end{cases}$
        \State Take action $a_t$, observe $r_t$, $s_{t+1}$
        \State $B \gets B \cup (s_t, a_t, s_{t+1}, r_{t})$
    \EndFor
    \State Update \( \pi_S \) with batch $B$ \Comment{Any on-policy update}
\EndWhile
\end{algorithmic}
\end{algorithm}

\subsection{Algorithm}
We summarize our approach in Algorithm~\ref{alg:ebtl}.
At a high-level, the student policy interacts with the environment to collect trajectories, while selectively receiving guidance from a teacher policy.
At each timestep, EBTL evaluates whether the current state is familiar to the teacher using an energy-based OOD score.
If the state is deemed in-distribution and a decaying probability schedule permits guidance, the action is sampled from the teacher policy; otherwise, the student policy acts.
The resulting trajectories are stored in a batch and used to update the student policy.
To decide when to issue guidance, we compute $\tau$ (see Equation~\ref{eq:action_rule}) as the empirical \( q \)-quantile of energy scores over teacher training states \( \mathcal{S}_T \), i.e., \( \tau = \text{Quantile}_q\left( \{ \phi(s) \mid s \in \mathcal{S}_T \} \right) \). 
Following prior work~\cite{schmitt2018kickstarting, uchendu2023jump, campbell2023introspective}, we apply a linear decay schedule \( \delta(t) = \max(0, \delta_0 - \kappa t) \) to control the probability of guidance.
This enables early reliance on the teacher while gradually promoting student autonomy.

\paragraph{Energy Regularization.}
As discussed in Section~\ref{sec:energy-visitation}, the score function \( \phi(s) \) is related to the teacher’s state-visitation frequency: frequently visited states tend to receive higher scores.
However, this implicit signal may be insufficient to reliably distinguish ID from OOD states, as the teacher is trained solely on trajectories from its own environment and lacks exposure to OOD samples.

To improve separability, we adopt the energy-based loss from~\citet{liu2020energy}, augmenting the teacher’s training with a fixed set of representative OOD states. Let \( \mathcal{D}_{\text{in}}^{\text{train}} \) denote the set of ID states collected during teacher training and \( \mathcal{D}_{\text{out}}^{\text{train}} \) a curated set of OOD states. Let \( \mathbf{s}_{\text{in}} \sim \mathcal{D}_{\text{in}}^{\text{train}} \) and \( \mathbf{s}_{\text{out}} \sim \mathcal{D}_{\text{out}}^{\text{train}} \) denote samples from each. Using the energy score \( \phi(s) = -E(s) \), the loss is defined as:
\begin{align*}
\mathcal{L}_{\text{energy}} = &\; \mathbb{E}_{\mathbf{s}_{\text{in}}} \left[ \left( \max(0, m_{\text{in}} - \phi(\mathbf{s}_{\text{in}})) \right)^2 \right] \\
&+ \mathbb{E}_{\mathbf{s}_{\text{out}}} \left[ \left( \max(0, \phi(\mathbf{s}_{\text{out}}) - m_{\text{out}}) \right)^2 \right],
\end{align*}
where \( m_{\text{in}} \in \mathbb{R} \) and \( m_{\text{out}} \in \mathbb{R} \) are margin thresholds for ID and OOD energy scores, respectively. The first term penalizes ID states with energy scores below \( m_{\text{in}} \); the second penalizes OOD states with energy scores above \( m_{\text{out}} \).
The overall teacher loss is $\mathcal{L}_{\text{total}} = \mathcal{L}_{\text{RL}} + \lambda \cdot \mathcal{L}_{\text{energy}}$
where \( \lambda \in \mathbb{R}^+ \) controls the weight of the energy regularization. In EBTL, OOD samples are drawn from random policy rollouts in the target environment. ID samples are drawn from the teacher’s own training experience via random subsampling.

\paragraph{Off-Policy Correction.}
Actions sampled from the teacher policy, i.e. $a_t \sim \pi_T(\cdot|s_t)$, are inherently off-policy with respect to the student, and subsequently on-policy RL algorithms require a correction.
In this work, we use proximal policy optimization (PPO) for training, and augment the actor and critic losses with an importance sampling ratio \( r_t \).
Let $\alpha_t\in\{0,1\}$ denote whether the action at time $t$ is sampled from the teacher policy ($\alpha_t=1$) or the student policy ($\alpha_t=0$).
The clipped objective loss for the actor is given by:
\begin{equation}
\label{eq:ppo_loss}
\mathcal{L}^{\pi} = \hat{\mathbb{E}}_t \Bigl[ \min \bigl( r_t \,\hat{A}_t,\;\text{clip}(r_t,\,1 - \epsilon,\,1 + \epsilon)\,\hat{A}_t \bigr) \Bigr], 
\quad
r_t =
\begin{cases}
\dfrac{\pi_{\theta_S}(a_t \mid s_t)}{\pi_{\theta_{\text{old}}}(a_t \mid s_t)} & \text{if } \alpha_t = 0, \\
\dfrac{\pi_{\theta_S}(a_t \mid s_t)}{\pi_{\theta_T}(a_t \mid s_t)} & \text{if } \alpha_t = 1.
\end{cases}
\end{equation}
Here, \(\theta_S\) denote the current parameters of the student policy \(\pi_S\); \(\theta_{\mathrm{old}}\) denotes the behavior policy used by \(\pi_S\) at data collection time; and \(\theta_T\) denotes the frozen parameters of the teacher policy \(\pi_T\). The estimated advantage at timestep \(t\) is denoted \(\hat{A}_t \in \mathbb{R}\), and \(\epsilon \in \mathbb{R}^+\) is the clipping threshold. The value function \(V: \mathcal{S} \to \mathbb{R}\) estimates the expected return from state \(s_t\), and is trained with the same importance ratio:
$\mathcal{L}^V = \hat{\mathbb{E}}_t\bigl[r_t\,\bigl(V(s_t) - V_t^{\mathrm{target}}\bigr)^2\bigr]$, 
where \(V_t^{\mathrm{target}}\) is the bootstrapped target and \(\gamma\) is the discount factor. Equation~\eqref{eq:ppo_loss} ensures valid updates under mixed-policy rollouts.

\begin{figure}[t]
    \centering
    \includegraphics[width=(\linewidth)]{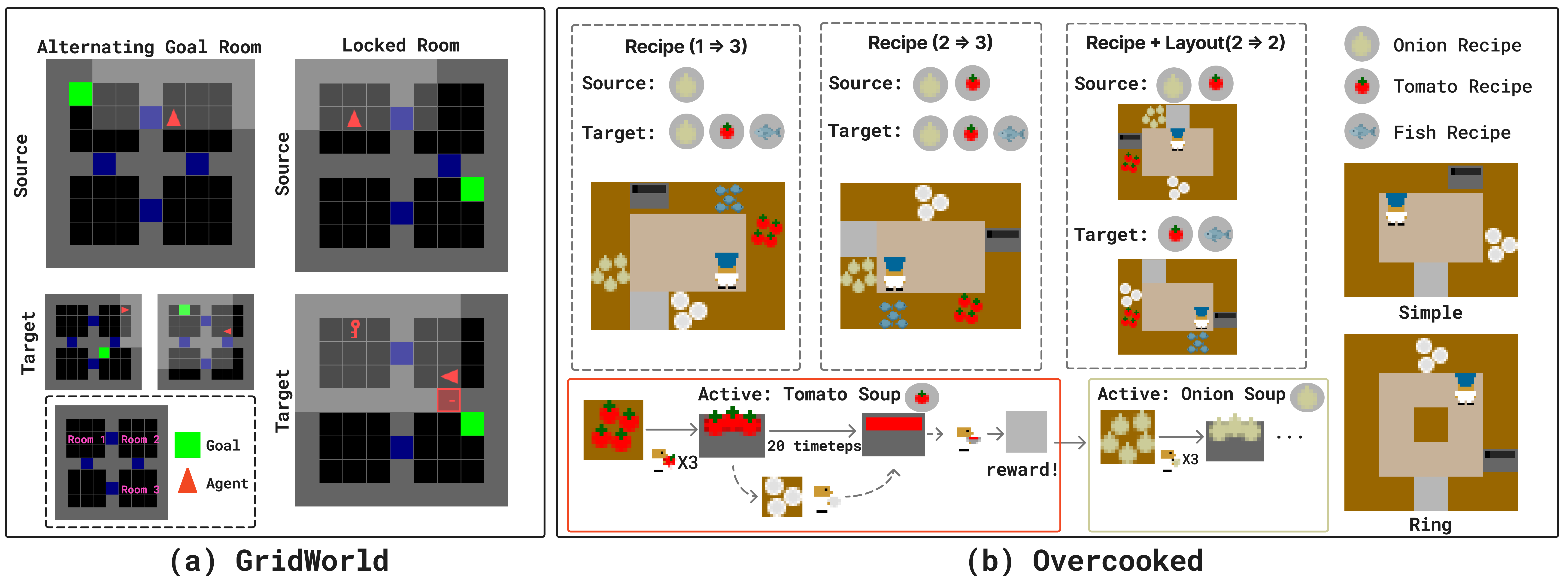}
    \caption{
    Environments used for empirical experiments. Refer to Section~\ref{sec:experiments} for detailed descriptions.
    }
    \label{fig:env_overview}
\end{figure}

\section{Experiments}
\label{sec:experiments}
We evaluate our method in two settings: \textbf{single-task transfer} and \textbf{multi-task transfer}. 
The single-task setting is a Minigrid-based~\cite{MinigridMiniworld23} environment composed of navigation tasks, where the agent's objective is simply to reach a goal location.
In the multi-task setting, we use Overcooked~\cite{carroll2019utility}, where the agent must learn to solve multiple task variants, such as how to cook different recipes.
For each setting, we construct multiple experimental scenarios that introduce increasing covariate shift between the teacher's training distribution and the student's target distribution. This allows us to evaluate the robustness of our method under progressively harder transfer scenarios. 

In each domain, we examine learning performance with the goal of understanding: (1) whether our method leads to improved sample efficiency, and (2) when the teacher chooses to provide guidance during the student’s learning process.

We compare our approach, energy-based transfer learning, against the following baselines:
\begin{itemize}
    \item \textbf{No Transfer:} An agent trained from scratch with standard RL.
    \item \textbf{Action Advising (AA):} A teacher provides advice at every timestep. Advice issue rate decays over time using a predefined schedule.
    \item \textbf{Fine-Tuning:} The student is initialized from a pretrained teacher policy. Convolutional layers are frozen, and only the remaining parameters are updated during training.
    \item \textbf{Kickstarting RL (KSRL)}~\cite{schmitt2018kickstarting}: A policy distillation method that adds a decaying cross-entropy loss between the student and teacher policies to encourage imitation.
    \item \textbf{JumpStart RL (JSRL)}~\cite{uchendu2023jump}: A time-based advising method where the teacher provides guidance only during the early part of each episode, with a decaying timestep threshold.
\end{itemize}

All experiments use teacher and student policies trained with the TorchRL~\cite{bou2023torchrl} implementation of proximal policy optimization~\cite{schmitt2018kickstarting}. Full hyperparameter details are provided in the Appendix.

\begin{figure}[t]
    \centering
    \includegraphics[width=\textwidth]{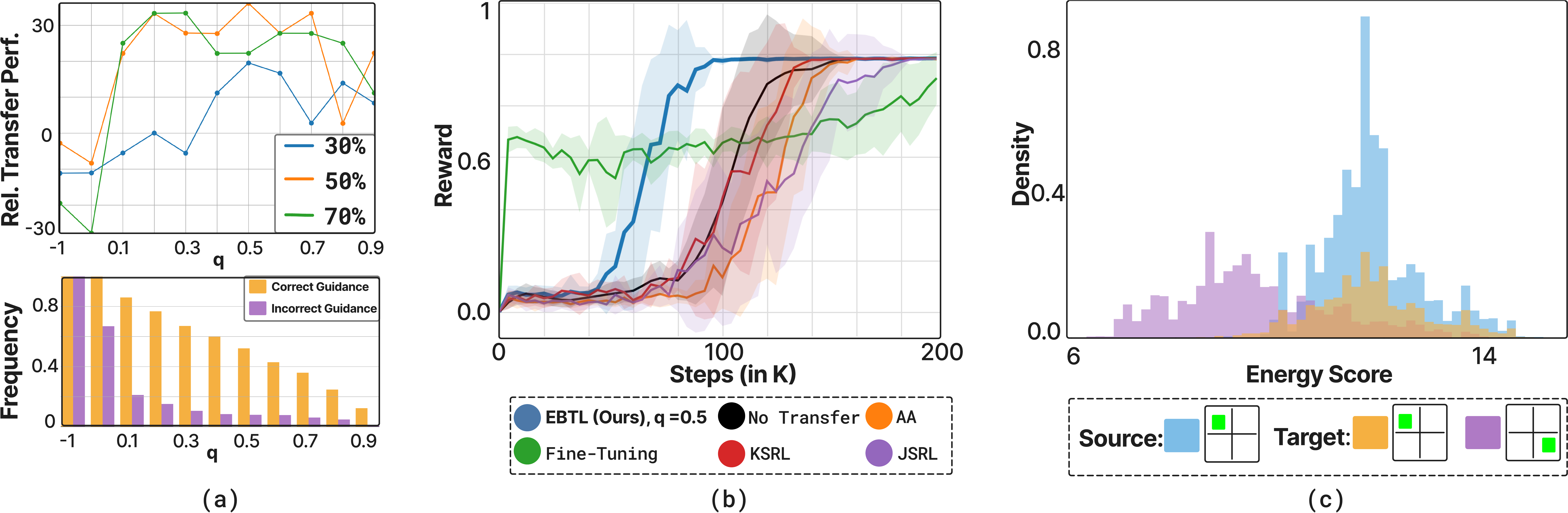}
    \caption{\textbf{Alternating-Goal} results. Results are averaged across 10 seeds. \textbf{(a, Top)} Relative transfer performance (in \%) for varying energy thresholds and decay schedules (50\% means the guidance probability decays to $0$ when training is 50\% done). A threshold of $-1$ indicates that guidance is given in all states. \textbf{(a, Bottom)} The rate at which correct and incorrect guidance is issued for each threshold; guidance is considered correct if it is issued for in-distribution states. \textbf{(b)} Evaluation returns for EBTL and baselines. \textbf{(c)} Empirical energy score distributions with respect to the teacher policy. The source task (blue) shows the teacher's training distribution. The target task (orange + purple), measured during transfer, is bimodal: one mode overlaps with the source (shared sub-task, in-distribution), while the other does not (non-shared sub-task, out-of-distribution).}
    \label{fig:grid_results}
\end{figure}

\begin{figure}[t]
    \centering
    \includegraphics[width=\textwidth]{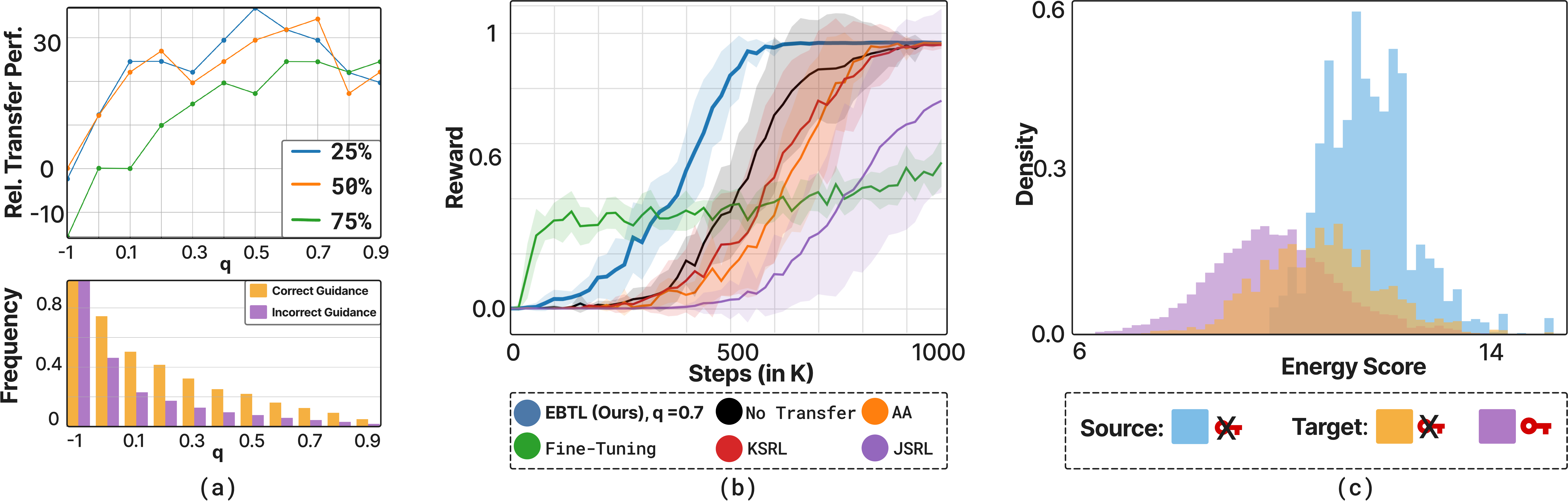}
    \caption{\textbf{Locked Room} results (10 seeds). See Figure~\ref{fig:grid_results} for caption details.
    }
    \label{fig:grid_results2}
\end{figure}

\subsection{Single-Task Setting: Minigrid}
Our Minigrid environment consists of four interconnected rooms and serves as a controlled single-task setting. We design two transfer setups, as illustrated in Figure~\ref{fig:env_overview}a:

\textbf{(1) Alternating Goal Room.} The source task always places the goal in a random location in Room 1 (upper-left), while the target task randomly places it in either Room 1 (upper-left) or Room 3 (lower-right).
The teacher should intervene only when the goal is in Room 1, where its prior experience applies; when the goal is in Room 3, the student must act independently.

\textbf{(2) Locked Room.} The source task allows free movement between rooms, while the target task introduces a locked door between the upper and lower areas. To reach the goal, the agent must first retrieve a key -- randomly placed in the upper rooms -- and unlock the door. Since the teacher was not trained to find or use a key, it should only provide guidance after the key has been picked up, when the remaining navigation matches its prior experience.

The results for the Alternating Goal Room and Locked Room setups are illustrated in Figure~\ref{fig:grid_results} and Figure~\ref{fig:grid_results2}, respectively.
We make the following observations.

\textbf{EBTL consistently outperforms all baselines.}  
In both transfer setups, EBTL achieves the highest sample-efficiency of all baselines.
For Alternating Goal Room, when the energy threshold \( q \geq 0.1 \), and for Locked Room, when \( q > 0.1 \), EBTL rarely issues guidance in unfamiliar states, leading to significant improvements in transfer performance.
As shown in Figure~\ref{fig:Map}, the teacher correctly assigns higher energy scores to states encountered during training -- when the goal is in Room 1 (upper-left) -- compared to unseen states with the goal in Room 3 (lower-right).

\begin{figure}[t]
    \centering
    \begin{subfigure}[t]{0.49\textwidth}
        \centering
        \includegraphics[width=\linewidth]{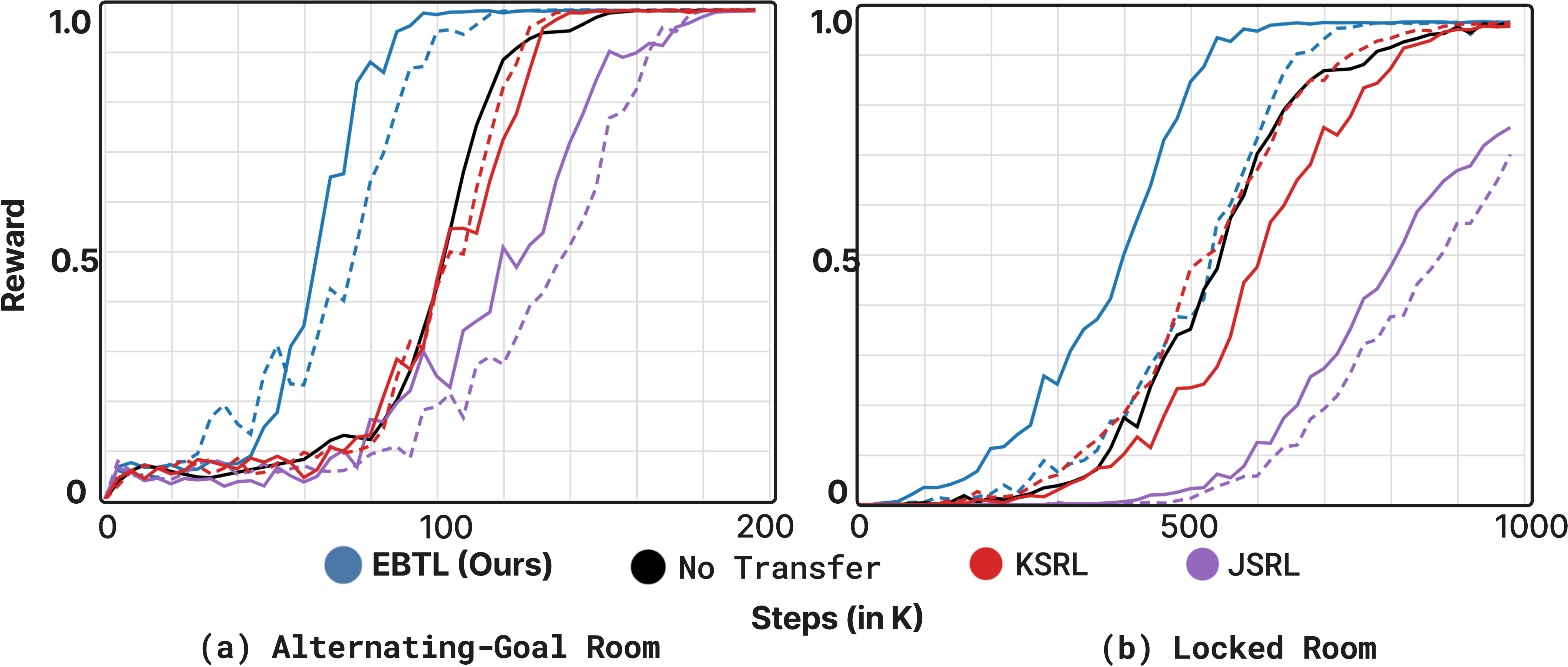}
        \caption{}
        \label{fig:Energy}
    \end{subfigure}
    \hfill
    \begin{subfigure}[t]{0.49\textwidth}
        \centering
        \includegraphics[width=\linewidth]{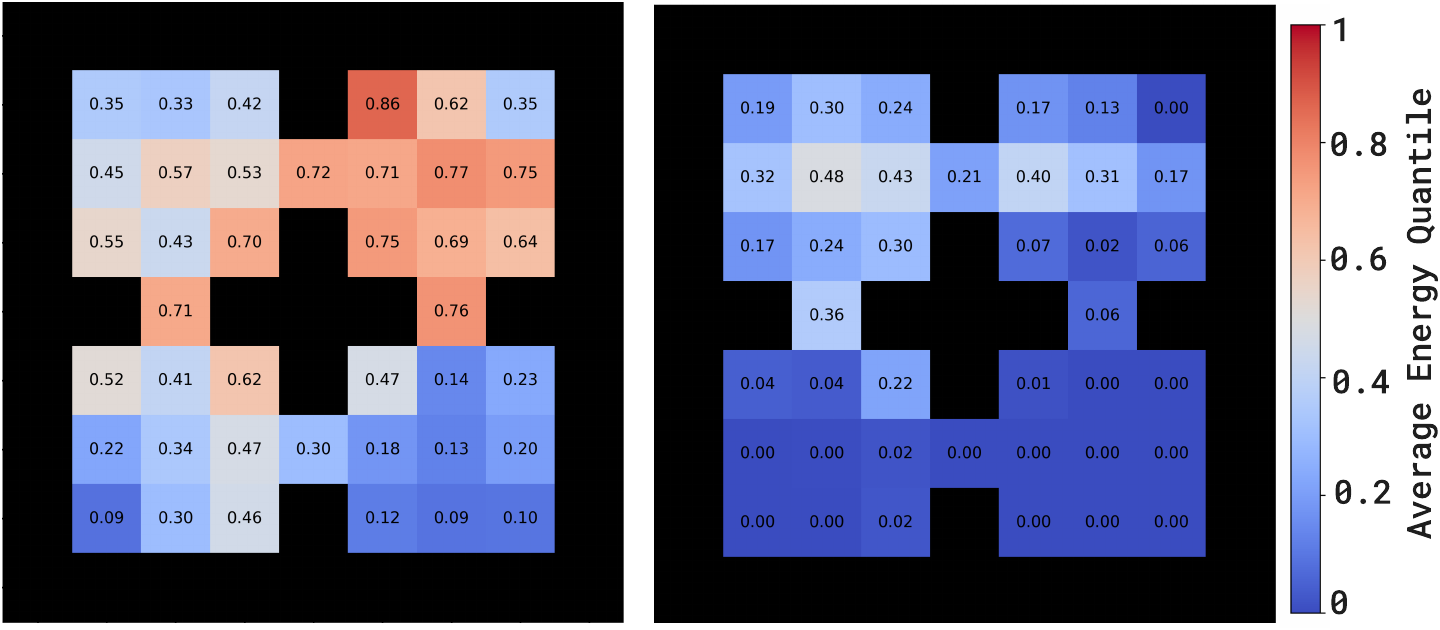}
        \caption{}
        \label{fig:Map}
    \end{subfigure}
    \caption{\textbf{(a)} Transfer performance with (solid) vs. without (dashed) energy regularization. 
    \textbf{(b)}
Heatmaps showing the average energy quantile of each state under the teacher policy for  Alternating Goal Room. Left: source task where the goal is always in Room 1 (upper-left). Right: target task states collected only when the goal is placed in Room 3 (lower-right). Higher quantiles indicate greater teacher familiarity.
    }
    \label{fig:side-by-side}
\end{figure}

\textbf{Higher covariate shift makes OOD detection more challenging.}  
In Alternating Goal Room, the teacher is able to clearly separate ID from OOD states, as reflected by the well-separated energy distributions in Figure~\ref{fig:grid_results}c.
However, in Locked Room, the introduction of novel elements -- such as the door and key -- creates a stronger covariate shift, making it harder for the teacher to correctly estimate familiarity. 
As shown in Figure~\ref{fig:grid_results2}c, the separation between ID and OOD states becomes less distinct.
Despite this, the teacher still assigns consistently lower energy scores to pre-key states compared to post-key states, indicating that it can still meaningfully differentiate between unfamiliar and familiar regions of the state space.

\textbf{There exists an optimal energy threshold \( q \) that balances filtering harmful and helpful guidance.}  
The performance curves exhibit a mountain-shaped trend: increasing \( q \) initially boosts transfer performance by suppressing harmful advice in unfamiliar states. However, when \( q \) becomes too large, the teacher begins to withhold guidance even in familiar situations, limiting its usefulness. This trade-off is evident in both Figure~\ref{fig:grid_results}a and Figure~\ref{fig:grid_results2}a, where performance declines once \( q > 0.7 \) due to overly conservative advising.

\textbf{Energy regularization significantly improves EBTL but has little effect on other methods.}  
As shown in Figure~\ref{fig:Energy}, incorporating energy loss enables EBTL to converge faster, especially in the more challenging Locked Room environment where covariate shift is greater. In contrast, other baselines show no noticeable difference in performance regardless of whether the teacher was trained with or without energy regularization -- their convergence times remain similar. Notably, even without energy loss, EBTL still matches or exceeds all baselines, highlighting the robustness of our approach.

\begin{figure}[t]
    \centering
    \includegraphics[width=\linewidth]{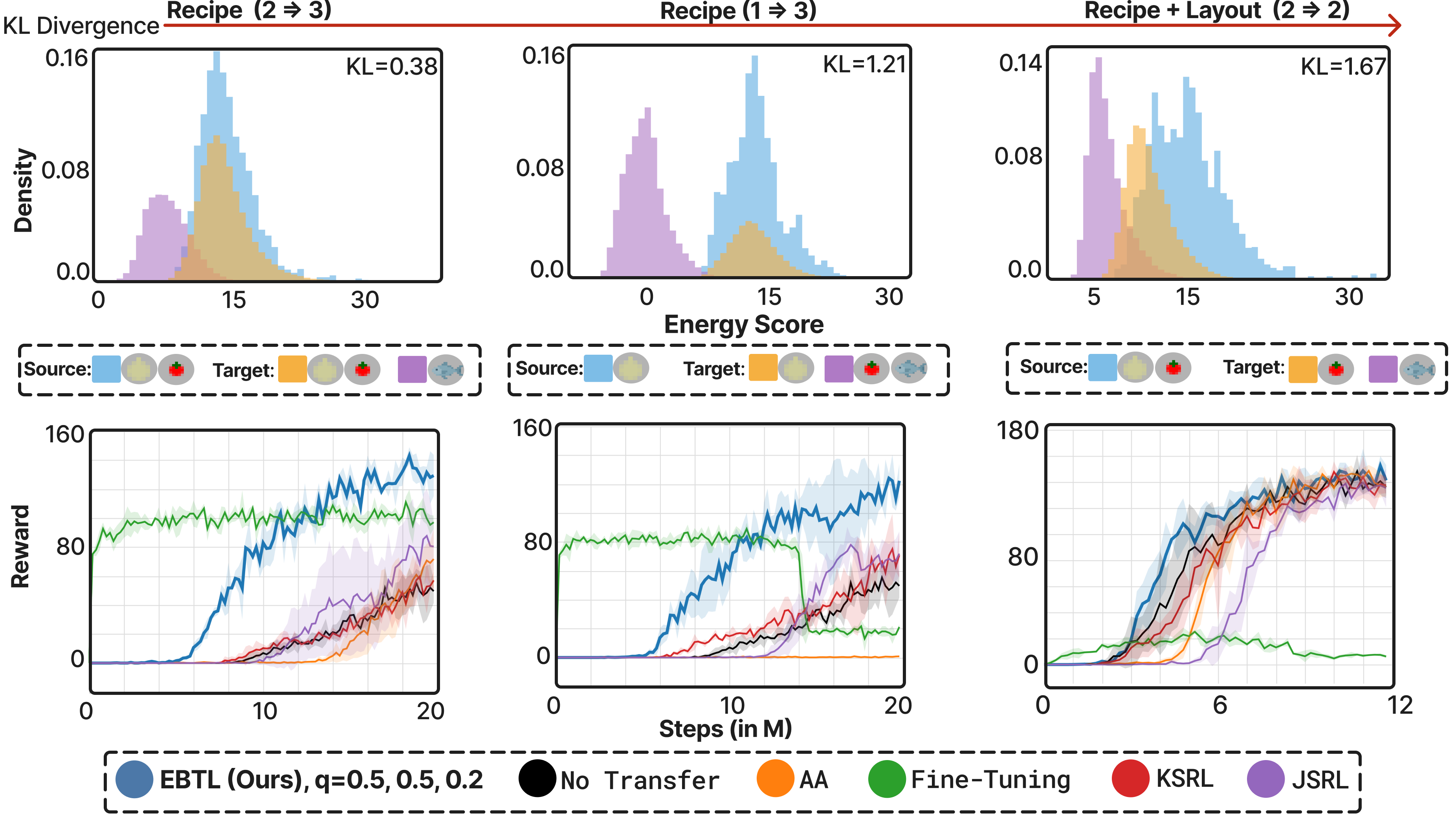}
    \caption{
    \textbf{Simple Room} results. Result are averaged across 3 seeds. \textbf{(Top)} Empirical energy score distributions with respect to the teacher policy. The source task (blue) shows the teacher's training distribution. The target task (orange + purple), measured during transfer, is bimodal: one mode overlaps with the source (shared sub-task, in-distribution), while the other does not (non-shared sub-task, out-of-distribution). \textbf{(Bottom)} Evaluation returns for EBTL and baselines. Quantiles are set to \( q = 0.5 \) for Recipe Shift and \( q = 0.2 \) for the Recipe $+$ Layout Shift.
    }
\label{fig:cook_results}
\end{figure}

\subsection{Multi-Task Setting: Overcooked}
We create a single-agent variant of the popular Overcooked~\cite{carroll2019utility} environment designed to evaluate long-horizon multi-task learning.
At each timestep, exactly one recipe -- onion, tomato, or fish soup -- is active, and the agent must prepare it by placing three matching ingredients into a pot.
Once the soup is cooked (after a 20-step wait), the agent must retrieve a dish and deliver the soup to the serving station to receive a reward.
A new recipe is randomly selected from the allowed set after each delivery, regardless of correctness.
Rewards are sparse and given only for correct deliveries, with additional shaping applied to accelerate training.
An overview of the setup is shown in Figure~\ref{fig:env_overview}b, although we note that the locations of the ingredients, pot, and serving station are randomized.
We evaluate on two layouts of increasing complexity: a simple room and a ring-shaped room.
For each layout, we construct three Overcooked transfer setups with increasing levels of covariate shift between the teacher and student environments:
\begin{enumerate}
    \item \textbf{Recipe Shift (2 $\Rightarrow$ 3):} Both the source and target environments include all three ingredients: onions, tomatoes, and fish.
    The source task requires onion and tomato soup, while the target task requires onion, tomato, and fish soup resulting in recipe shift.

    \item \textbf{Recipe Shift (1 $\Rightarrow$ 3):} Both environments again have all three ingredients.
    This time, the source task requires only onion soup while the target task requires onion, tomato, and fish soup, introducing a higher degree of recipe shift.

    \item \textbf{Recipe $+$ Layout   Shift (2 $\Rightarrow$ 2):} The source environment includes only onions and tomatoes and requires onion and tomato soup, while the target environment includes only tomatoes and fish and requires 
    recipe and layout.
    This results in both recipe and layout shift.
\end{enumerate}

\begin{figure}[t]
    \centering
    \includegraphics[width=\linewidth]{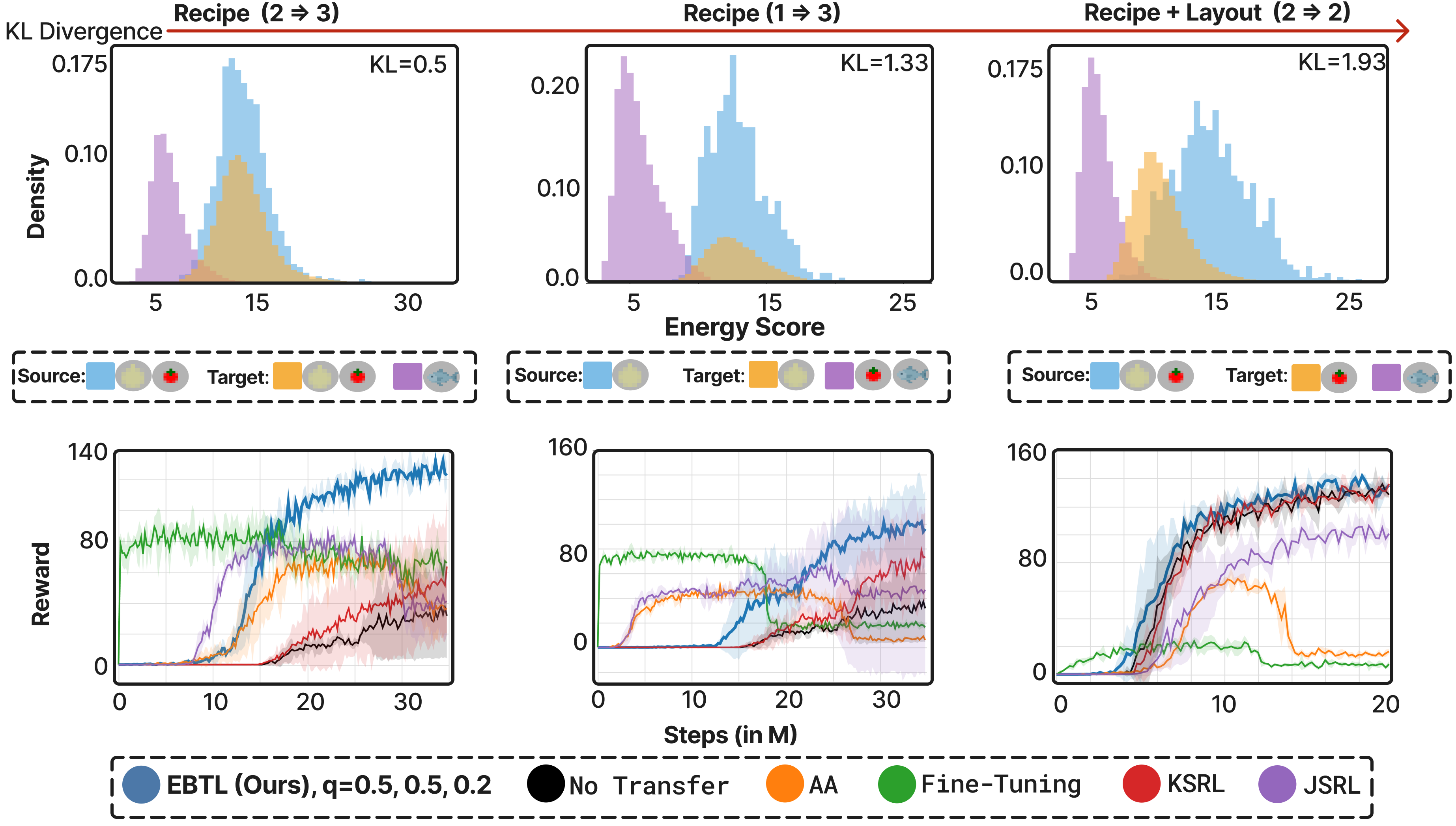}
    \caption{
    \textbf{Ring Room} results (3 seeds).
    See Figure~\ref{fig:cook_results} for caption details.
    }
\label{fig:cook_results2}
\end{figure}

The results for the Simple Room and Ring Room setups are shown in Figure~\ref{fig:cook_results} and Figure~\ref{fig:cook_results2}, respectively.
The relative difficulty of each transfer scenario is reflected by the increasing KL divergence between the energy score distributions over source and target states.

\textbf{EBTL maintains positive transfer under increasing covariate shift.}
EBTL consistently outperforms all baseline methods in both sample efficiency and final policy return across all scenarios.
As covariate shift between the source and target environments increases, transfer becomes more challenging.
This is evident in the slower convergence from Recipe (2 $\Rightarrow$ 3) to Recipe (1 $\Rightarrow$ 3) in the Ring Room (Figure~\ref{fig:cook_results2}), where the teacher is only familiar with 1 rather than 2 recipes (out of 3 total).
Despite this, EBTL yields positive transfer performance by restricting guidance to states associated with recipes that the teacher has encountered during training.

\textbf{Shared layouts simplify OOD detection.}
In scenarios where the source and target tasks share spatial layouts, i.e. Recipe (2 $\Rightarrow$ 3) and Recipe (1 $\Rightarrow$ 3), the covariate shift is due entirely to the recipe encoding in the observation.
This results in a clearly bimodal energy distribution in the target task -- one mode for ID states and another for OOD -- simplifying the OOD detection problem (refer to the top row of Figure~\ref{fig:cook_results} and Figure~\ref{fig:cook_results2}).
However, when the layout changes, as in Recipe $+$ Layout (2 $\Rightarrow$ 2), there is a systematic decrease in ID energy scores, blurring the ID/OOD boundary.
This is because even states associated with familiar recipes appear slightly OOD due to the layout shift.

\textbf{Robustness to layout complexity.}
EBTL achieves stable positive transfer across both the Simple Room and Ring Room environments, resulting in high returns and low variance across seeds.
In contrast, baseline methods without selective guidance, such as action advising (AA), see performance degradation as layout complexity increases.
For example, in all three transfer settings in Figure~\ref{fig:cook_results2}, AA performance becomes unstable and reduces as training progresses, suggesting that an over-reliance on suboptimal advice increasingly impairs learning as the layout becomes more complex.

\section{Conclusion and Limitations}
We introduced energy-based transfer learning (EBTL), a method for improving sample efficiency in reinforcement learning by enabling selective teacher guidance. EBTL leverages energy scores as a proxy for familiarity, issuing advice only in states likely to lie within the teacher’s training distribution. Experiments across single-task and multi-task transfer settings demonstrate that EBTL consistently outperforms standard baselines, particularly under covariate shift.
\paragraph{Limitations.} While effective, EBTL requires specifying an energy threshold and is primarily designed for settings with covariance rather than label shift. These limitations point to promising directions for future work on adaptive thresholding and broader forms of transfer.

\clearpage
\bibliographystyle{plainnat}
\bibliography{neurips_2025}

\begin{thebibliography}{33}
\providecommand{\natexlab}[1]{#1}
\providecommand{\url}[1]{\texttt{#1}}
\expandafter\ifx\csname urlstyle\endcsname\relax
  \providecommand{\doi}[1]{doi: #1}\else
  \providecommand{\doi}{doi: \begingroup \urlstyle{rm}\Url}\fi

\bibitem[Andrychowicz et~al.(2017)Andrychowicz, Wolski, Ray, Schneider, Fong, Welinder, McGrew, Tobin, Pieter~Abbeel, and Zaremba]{andrychowicz2017hindsight}
Marcin Andrychowicz, Filip Wolski, Alex Ray, Jonas Schneider, Rachel Fong, Peter Welinder, Bob McGrew, Josh Tobin, OpenAI Pieter~Abbeel, and Wojciech Zaremba.
\newblock Hindsight experience replay.
\newblock \emph{Advances in neural information processing systems}, 30, 2017.

\bibitem[Arulkumaran et~al.(2017)Arulkumaran, Deisenroth, Brundage, and Bharath]{arulkumaran2017deep}
Kai Arulkumaran, Marc~Peter Deisenroth, Miles Brundage, and Anil~Anthony Bharath.
\newblock Deep reinforcement learning: A brief survey.
\newblock \emph{IEEE Signal Processing Magazine}, 34\penalty0 (6):\penalty0 26--38, 2017.

\bibitem[Bou et~al.(2023)Bou, Bettini, Dittert, Kumar, Sodhani, Yang, Fabritiis, and Moens]{bou2023torchrl}
Albert Bou, Matteo Bettini, Sebastian Dittert, Vikash Kumar, Shagun Sodhani, Xiaomeng Yang, Gianni~De Fabritiis, and Vincent Moens.
\newblock Torchrl: A data-driven decision-making library for pytorch, 2023.

\bibitem[Brys et~al.(2015)Brys, Harutyunyan, Taylor, and Now{\'e}]{brys2015policy}
Tim Brys, Anna Harutyunyan, Matthew~E Taylor, and Ann Now{\'e}.
\newblock Policy transfer using reward shaping.
\newblock In \emph{AAMAS}, pages 181--188, 2015.

\bibitem[Campbell et~al.(2023)Campbell, Guo, Xie, Stepputtis, and Sycara]{campbell2023introspective}
Joseph Campbell, Yue Guo, Fiona Xie, Simon Stepputtis, and Katia Sycara.
\newblock Introspective action advising for interpretable transfer learning.
\newblock In \emph{Conference on Lifelong Learning Agents}, pages 1072--1090. PMLR, 2023.

\bibitem[Carroll et~al.(2019)Carroll, Shah, Ho, Griffiths, Seshia, Abbeel, and Dragan]{carroll2019utility}
Micah Carroll, Rohin Shah, Mark~K Ho, Tom Griffiths, Sanjit Seshia, Pieter Abbeel, and Anca Dragan.
\newblock On the utility of learning about humans for human-ai coordination.
\newblock \emph{Advances in neural information processing systems}, 32, 2019.

\bibitem[Chevalier{-}Boisvert et~al.(2023)Chevalier{-}Boisvert, Dai, Towers, Perez{-}Vicente, Willems, Lahlou, Pal, Castro, and Terry]{MinigridMiniworld23}
Maxime Chevalier{-}Boisvert, Bolun Dai, Mark Towers, Rodrigo Perez{-}Vicente, Lucas Willems, Salem Lahlou, Suman Pal, Pablo~Samuel Castro, and Jordan Terry.
\newblock Minigrid {\&} miniworld: Modular {\&} customizable reinforcement learning environments for goal-oriented tasks.
\newblock In \emph{Advances in Neural Information Processing Systems 36, New Orleans, LA, USA}, December 2023.

\bibitem[Guo et~al.(2023)Guo, Campbell, Stepputtis, Li, Hughes, Fang, and Sycara]{guo2023explainable}
Yue Guo, Joseph Campbell, Simon Stepputtis, Ruiyu Li, Dana Hughes, Fei Fang, and Katia Sycara.
\newblock Explainable action advising for multi-agent reinforcement learning.
\newblock In \emph{2023 IEEE International Conference on Robotics and Automation (ICRA)}, pages 5515--5521. IEEE, 2023.

\bibitem[Guo et~al.(2025)Guo, Zhang, Stepputtis, Campbell, and Sycara]{pmlr-v274-guo25a}
Yue Guo, Xijia Zhang, Simon Stepputtis, Joseph Campbell, and Katia~P. Sycara.
\newblock Adaptive action advising with different rewards.
\newblock In Vincenzo Lomonaco, Stefano Melacci, Tinne Tuytelaars, Sarath Chandar, and Razvan Pascanu, editors, \emph{Proceedings of The 3rd Conference on Lifelong Learning Agents}, volume 274 of \emph{Proceedings of Machine Learning Research}, pages 252--267. PMLR, 29 Jul--01 Aug 2025.
\newblock URL \url{https://proceedings.mlr.press/v274/guo25a.html}.

\bibitem[Haarnoja et~al.(2017)Haarnoja, Tang, Abbeel, and Levine]{haarnoja2017reinforcement}
Tuomas Haarnoja, Haoran Tang, Pieter Abbeel, and Sergey Levine.
\newblock Reinforcement learning with deep energy-based policies.
\newblock In \emph{International conference on machine learning}, pages 1352--1361. PMLR, 2017.

\bibitem[Harutyunyan et~al.(2015)Harutyunyan, Devlin, Vrancx, and Now{\'e}]{harutyunyan2015expressing}
Anna Harutyunyan, Sam Devlin, Peter Vrancx, and Ann Now{\'e}.
\newblock Expressing arbitrary reward functions as potential-based advice.
\newblock In \emph{Proceedings of the AAAI conference on artificial intelligence}, volume~29, 2015.

\bibitem[Hendrycks and Gimpel(2016)]{hendrycks2016baseline}
Dan Hendrycks and Kevin Gimpel.
\newblock A baseline for detecting misclassified and out-of-distribution examples in neural networks.
\newblock \emph{arXiv preprint arXiv:1610.02136}, 2016.

\bibitem[Ju et~al.(2022)Ju, Juan, Gomez, Nakamura, and Li]{ju2022transferring}
Hao Ju, Rongshun Juan, Randy Gomez, Keisuke Nakamura, and Guangliang Li.
\newblock Transferring policy of deep reinforcement learning from simulation to reality for robotics.
\newblock \emph{Nature Machine Intelligence}, 4\penalty0 (12):\penalty0 1077--1087, 2022.

\bibitem[LeCun et~al.(2006)LeCun, Chopra, Hadsell, Ranzato, Huang, et~al.]{lecun2006tutorial}
Yann LeCun, Sumit Chopra, Raia Hadsell, M~Ranzato, Fujie Huang, et~al.
\newblock A tutorial on energy-based learning.
\newblock \emph{Predicting structured data}, 1\penalty0 (0), 2006.

\bibitem[Li(2017)]{li2017deep}
Yuxi Li.
\newblock Deep reinforcement learning: An overview.
\newblock \emph{arXiv preprint arXiv:1701.07274}, 2017.

\bibitem[Liu et~al.(2020)Liu, Wang, Owens, and Li]{liu2020energy}
Weitang Liu, Xiaoyun Wang, John Owens, and Yixuan Li.
\newblock Energy-based out-of-distribution detection.
\newblock \emph{Advances in neural information processing systems}, 33:\penalty0 21464--21475, 2020.

\bibitem[Lu et~al.(2023)Lu, Qin, Zhai, Shen, Chen, Wang, Kolouri, Stepputtis, Campbell, and Sycara]{lu2023characterizing}
Yuzhe Lu, Yilong Qin, Runtian Zhai, Andrew Shen, Ketong Chen, Zhenlin Wang, Soheil Kolouri, Simon Stepputtis, Joseph Campbell, and Katia Sycara.
\newblock Characterizing out-of-distribution error via optimal transport.
\newblock \emph{Advances in Neural Information Processing Systems}, 36:\penalty0 17602--17622, 2023.

\bibitem[Maclin and Shavlik(1994)]{maclin1994incorporating}
Richard Maclin and Jude~W Shavlik.
\newblock \emph{Incorporating advice into agents that learn from reinforcements}.
\newblock University of Wisconsin-Madison. Computer Sciences Department, 1994.

\bibitem[Ng et~al.(1999)Ng, Harada, and Russell]{ng1999policy}
Andrew~Y Ng, Daishi Harada, and Stuart Russell.
\newblock Policy invariance under reward transformations: Theory and application to reward shaping.
\newblock In \emph{Icml}, volume~99, pages 278--287. Citeseer, 1999.

\bibitem[Nguyen et~al.(2015)Nguyen, Yosinski, and Clune]{nguyen2015deep}
Anh Nguyen, Jason Yosinski, and Jeff Clune.
\newblock Deep neural networks are easily fooled: High confidence predictions for unrecognizable images.
\newblock In \emph{Proceedings of the IEEE conference on computer vision and pattern recognition}, pages 427--436, 2015.

\bibitem[{\"O}zt{\"u}rk et~al.(2023){\"O}zt{\"u}rk, Ta{\c{s}}y{\"u}rek, and T{\"u}rkdamar]{ozturk2023transfer}
Celal {\"O}zt{\"u}rk, Murat Ta{\c{s}}y{\"u}rek, and Mehmet~U{\u{g}}ur T{\"u}rkdamar.
\newblock Transfer learning and fine-tuned transfer learning methods' effectiveness analyse in the cnn-based deep learning models.
\newblock \emph{Concurrency and computation: practice and experience}, 35\penalty0 (4):\penalty0 e7542, 2023.

\bibitem[Portelas et~al.(2020)Portelas, Colas, Weng, Hofmann, and Oudeyer]{portelas2020automatic}
R{\'e}my Portelas, C{\'e}dric Colas, Lilian Weng, Katja Hofmann, and Pierre-Yves Oudeyer.
\newblock Automatic curriculum learning for deep rl: A short survey.
\newblock \emph{arXiv preprint arXiv:2003.04664}, 2020.

\bibitem[Ren et~al.(2019)Ren, Liu, Fertig, Snoek, Poplin, Depristo, Dillon, and Lakshminarayanan]{ren2019likelihood}
Jie Ren, Peter~J Liu, Emily Fertig, Jasper Snoek, Ryan Poplin, Mark Depristo, Joshua Dillon, and Balaji Lakshminarayanan.
\newblock Likelihood ratios for out-of-distribution detection.
\newblock \emph{Advances in neural information processing systems}, 32, 2019.

\bibitem[Rusu et~al.(2015)Rusu, Colmenarejo, Gulcehre, Desjardins, Kirkpatrick, Pascanu, Mnih, Kavukcuoglu, and Hadsell]{rusu2015policy}
Andrei~A Rusu, Sergio~Gomez Colmenarejo, Caglar Gulcehre, Guillaume Desjardins, James Kirkpatrick, Razvan Pascanu, Volodymyr Mnih, Koray Kavukcuoglu, and Raia Hadsell.
\newblock Policy distillation.
\newblock \emph{arXiv preprint arXiv:1511.06295}, 2015.

\bibitem[Schmitt et~al.(2018)Schmitt, Hudson, Zidek, Osindero, Doersch, Czarnecki, Leibo, Kuttler, Zisserman, Simonyan, et~al.]{schmitt2018kickstarting}
Simon Schmitt, Jonathan~J Hudson, Augustin Zidek, Simon Osindero, Carl Doersch, Wojciech~M Czarnecki, Joel~Z Leibo, Heinrich Kuttler, Andrew Zisserman, Karen Simonyan, et~al.
\newblock Kickstarting deep reinforcement learning.
\newblock \emph{arXiv preprint arXiv:1803.03835}, 2018.

\bibitem[Sutton et~al.(1998)Sutton, Barto, et~al.]{sutton1998reinforcement}
Richard~S Sutton, Andrew~G Barto, et~al.
\newblock \emph{Reinforcement learning: An introduction}, volume~1.
\newblock MIT press Cambridge, 1998.

\bibitem[Taylor and Stone(2009)]{taylor2009transfer}
Matthew~E Taylor and Peter Stone.
\newblock Transfer learning for reinforcement learning domains: A survey.
\newblock \emph{Journal of Machine Learning Research}, 10\penalty0 (7), 2009.

\bibitem[Torrey and Taylor(2013)]{torrey2013teaching}
Lisa Torrey and Matthew Taylor.
\newblock Teaching on a budget: Agents advising agents in reinforcement learning.
\newblock In \emph{Proceedings of the 2013 international conference on Autonomous agents and multi-agent systems}, pages 1053--1060, 2013.

\bibitem[Uchendu et~al.(2023)Uchendu, Xiao, Lu, Zhu, Yan, Simon, Bennice, Fu, Ma, Jiao, et~al.]{uchendu2023jump}
Ikechukwu Uchendu, Ted Xiao, Yao Lu, Banghua Zhu, Mengyuan Yan, Jos{\'e}phine Simon, Matthew Bennice, Chuyuan Fu, Cong Ma, Jiantao Jiao, et~al.
\newblock Jump-start reinforcement learning.
\newblock In \emph{International Conference on Machine Learning}, pages 34556--34583. PMLR, 2023.

\bibitem[Weiss et~al.(2016)Weiss, Khoshgoftaar, and Wang]{weiss2016survey}
Karl Weiss, Taghi~M Khoshgoftaar, and DingDing Wang.
\newblock A survey of transfer learning.
\newblock \emph{Journal of Big data}, 3:\penalty0 1--40, 2016.

\bibitem[Yang et~al.(2024)Yang, Zhou, Li, and Liu]{yang2024generalized}
Jingkang Yang, Kaiyang Zhou, Yixuan Li, and Ziwei Liu.
\newblock Generalized out-of-distribution detection: A survey.
\newblock \emph{International Journal of Computer Vision}, 132\penalty0 (12):\penalty0 5635--5662, 2024.

\bibitem[Yosinski et~al.(2014)Yosinski, Clune, Bengio, and Lipson]{yosinski2014transferable}
Jason Yosinski, Jeff Clune, Yoshua Bengio, and Hod Lipson.
\newblock How transferable are features in deep neural networks?
\newblock \emph{Advances in neural information processing systems}, 27, 2014.

\bibitem[Zhu et~al.(2023)Zhu, Lin, Jain, and Zhou]{zhu2023transfer}
Zhuangdi Zhu, Kaixiang Lin, Anil~K Jain, and Jiayu Zhou.
\newblock Transfer learning in deep reinforcement learning: A survey.
\newblock \emph{IEEE Transactions on Pattern Analysis and Machine Intelligence}, 45\penalty0 (11):\penalty0 13344--13362, 2023.

\end{thebibliography}

\newpage

\appendix
\section*{Appendix}
\addcontentsline{toc}{section}{Appendix}
\renewcommand{\thesubsection}{\Alph{subsection}}
\subsection{Training Details}
\subsubsection{GridWorld}
\paragraph{Reward Structure and Action Masking.}
In the MiniGrid experiments, agents are trained under a sparse reward setting: a reward of 1 is given only when the agent successfully reaches the goal location. No shaped or intermediate rewards are provided, making the task highly exploration-dependent. To mitigate the resulting challenge and accelerate learning, we apply action masking to dynamically restrict the agent’s action space based on its immediate environment. The action mask disables irrelevant or invalid actions at each timestep: (1) the \textit{forward} action is masked out if the agent is facing a wall, preventing redundant collisions; (2) the \textit{pickup} action is disabled unless the agent is directly facing a key; (3) the \textit{toggle} action is masked out unless the agent is facing a door; (4) the \textit{drop} action is always disabled, as object dropping is unnecessary in our tasks; and (5) the \textit{done} action is permanently disabled, since it is not used in our environments. This selective pruning of the action space reduces the likelihood of unproductive behavior and enables the agent to focus on learning goal-directed policies more effectively.

\paragraph{Teacher Training.}
In both experimental setups, we train two variants of the teacher policy using standard Proximal Policy Optimization (PPO) in the source environment: one with the energy-based loss and one without. 
For the teacher trained with energy loss, the \( m_{\text{in}} \) and \( m_{\text{out}} \) are set to 10 and 15 respectively. These values are chosen arbitrarily, as the separation between energy distributions is insensitive to the exact threshold choice (see Section~\ref{sec:energy-sensitivity}).
The training follows a consistent set of hyperparameters, as detailed in the next section. For the \textit{unlocked-to-locked} environment, 800K-step checkpoints are selected from both training variants. For the \textit{alternating-goal room} environment, 200K-step checkpoints are used.

\paragraph{Student Training.}
For each target task, we first train a student policy from scratch using standard PPO without any transfer to establish baseline performance. In the \textit{unlocked-to-locked} environment, the total training horizon for transfer experiments is set to 1 million steps, while in the \textit{alternating-goal room} environment, it is set to 200,000 steps. All experiments in the MiniGrid setups are conducted with 10 random seeds to ensure robustness. Within each domain, the student and teacher policies share the same model architecture.

\subsubsection{Overcooked-AI}
\paragraph{Reward Structure.}
In all Overcooked setups, no action masking is applied. Instead, shaped rewards are introduced to facilitate the training process. A shaped reward of 3 is given when the correct ingredient is added to a pot. An additional reward of 3 is awarded when a dish is picked up—provided there are no dishes already on the counter and the soup is either cooking or completed. A reward of 5 is granted when the soup is picked up. Furthermore, a shaped reward of 3 is given upon delivering the soup, regardless of whether it matches the currently active recipe. All shaped rewards follow a predefined linear decay schedule. In contrast, a sparse reward of 20 is awarded when the delivered soup matches the active recipe; this reward does not decay over time.

\paragraph{Teacher Training.}
In all Overcooked setups, teacher policies are trained in the source environment using standard Proximal Policy Optimization (PPO) with hyperparameters described in the following section. For each setup and source-target configuration, a specific checkpoint is selected to serve as the teacher for transfer. The table below lists the selected training step (in environment steps) corresponding to each teacher checkpoint.

\begin{table}[h]
\centering
\caption{Selected teacher checkpoints (in environment steps) for each Overcooked setup and source-target configuration.}
\label{tab:teacher_checkpoints}
\begin{tabular}{lccc}
\toprule
\textbf{Setup} & \textbf{Recipe (2 $\rightarrow$ 3)} & \textbf{Recipe (1 $\rightarrow$ 3)} & \textbf{Recipe + Layout (2 $\rightarrow$ 2)} \\
\midrule
Simple Room & 19{,}008{,}000 & 9{,}004{,}800 & 12{,}000{,}000 \\
Ring Room   & 2{,}400{,}000  & 10{,}003{,}200 & 18{,}000{,}000 \\
\bottomrule
\end{tabular}
\end{table}

\paragraph{Student Training.}
In all Overcooked setups, student policies are trained in the target environment using PPO under a fixed transfer horizon. For the teacher trained with energy loss, the $m_{in}$ and $m_{out}$ are set to 12 and 14 respectively. The training is conducted using consistent hyperparameters, as detailed in the next section. All experiments are repeated with 3 random seeds to ensure stability and reproducibility. The transfer horizon varies depending on the setup and source-target configuration. The table below summarizes the number of environment steps used during student training for each case:
\begin{table}[h]
\centering
\caption{Transfer horizons (in millions of environment steps) used for student training in each Overcooked setup and configuration. Each experiment is run with 3 random seeds.}
\label{tab:student_transfer_horizons}
\begin{tabular}{lccc}
\toprule
\textbf{Setup} & \textbf{Recipe (2 $\rightarrow$ 3)} & \textbf{Recipe (1 $\rightarrow$ 3)} & \textbf{Recipe + Layout (2 $\rightarrow$ 2)} \\
\midrule
Simple Room & 20M & 20M & 12M \\
Ring Room   & 35M & 35M & 20M \\
\bottomrule
\end{tabular}
\end{table}

\subsection{Hyperparameters}
\subsubsection{GridWorld}
All experimental setups in GridWorld are trained using a fixed set of PPO hyperparameters, summarized in \cref{tab:gridworld_hyperparams}. These settings remain consistent across all teacher and student training runs within the domain.

\begin{table}[h]
\centering
\caption{Hyperparameters used for all GridWorld experiments.}
\label{tab:gridworld_hyperparams}
\begin{tabular}{lc}
\toprule
\textbf{Hyperparameter} & \textbf{Value} \\
\midrule
Learning rate & 0.0005 \\
Discount factor (\(\gamma\)) & 0.9 \\
GAE lambda (\(\lambda\)) & 0.8 \\
Policy clip parameter & 0.2 \\
Value function clip parameter & 10.0 \\
Value loss coefficient & 0.5 \\
Entropy coefficient & 0.01 \\
Train batch size & 256 \\
SGD minibatch size & 128 \\
Number of SGD iterations & 4 \\
Number of parallel environments & 8 \\
Normalize advantage & False \\
\bottomrule
\end{tabular}
\end{table}

\subsubsection{Overcooked-AI}
All Overcooked experiments use a shared set of core PPO hyperparameters, listed in \cref{tab:overcooked_shared_hyperparams}. These settings are consistent across teacher and student training. However, the learning rate and reward shaping horizon vary depending on the layout and recipe configuration, summarized in \cref{tab:overcooked_specific_hyperparams}. We use the following notation: O = Onion, T = Tomato, F = Fish, OT = Onion + Tomato, TF = Tomato + Fish, OTF = Onion + Tomato + Fish.

\begin{table}[h]
\centering
\begin{minipage}[t]{0.48\textwidth}
\centering
\caption{Shared PPO hyperparameters across all Overcooked experiments.}
\label{tab:overcooked_shared_hyperparams}
\begin{tabular}{lc}
\toprule
\textbf{Hyperparameter} & \textbf{Value} \\
\midrule
Discount factor (\(\gamma\)) & 0.99 \\
GAE lambda (\(\lambda\)) & 0.6 \\
KL coeff & 0.0 \\
Reward clipping & False \\
Clip parameter & 0.2 \\
VF clip parameter & 10.0 \\
VF loss coeff & 0.5 \\
Entropy coeff & 0.1 \\
Train batch size & 9600 \\
SGD minibatch size & 1600 \\
SGD iterations & 8 \\
Parallel envs & 24 \\
Normalize advantage & False \\
\bottomrule
\end{tabular}
\end{minipage}%
\hfill
\begin{minipage}[t]{0.48\textwidth}
\centering
\caption{Setup-specific learning rates and reward shaping horizons.}
\label{tab:overcooked_specific_hyperparams}
\begin{tabular}{llcc}
\toprule
\textbf{Layout} & \textbf{Config} & \textbf{LR} & \textbf{Horizon} \\
\midrule
\multirow{5}{*}{Simple} 
& Recipe (O)        & 0.001  & 8M \\
& Recipe (OT)       & 0.001  & 15M \\
& Recipe (OTF)      & 0.001  & 25M \\
& Recipe + Layout (OT)       & 0.001  & 10M \\
& Recipe + Layout(TF)       & 0.001  & 10M \\
\midrule
\multirow{5}{*}{Ring} 
& Recipe (O)        & 0.0006 & 10M \\
& Recipe (OT)       & 0.0006 & 20M \\
& Recipe (OTF)      & 0.0006 & 30M \\
& Recipe + Layout (OT)       & 0.0006 & 15M \\
& Recipe + Layout (TF)       & 0.0006 & 15M \\
\bottomrule
\end{tabular}
\end{minipage}
\end{table}

\subsection{Model Architecture}
\begin{figure}[h]
    \centering
    \includegraphics[width=0.9\linewidth]{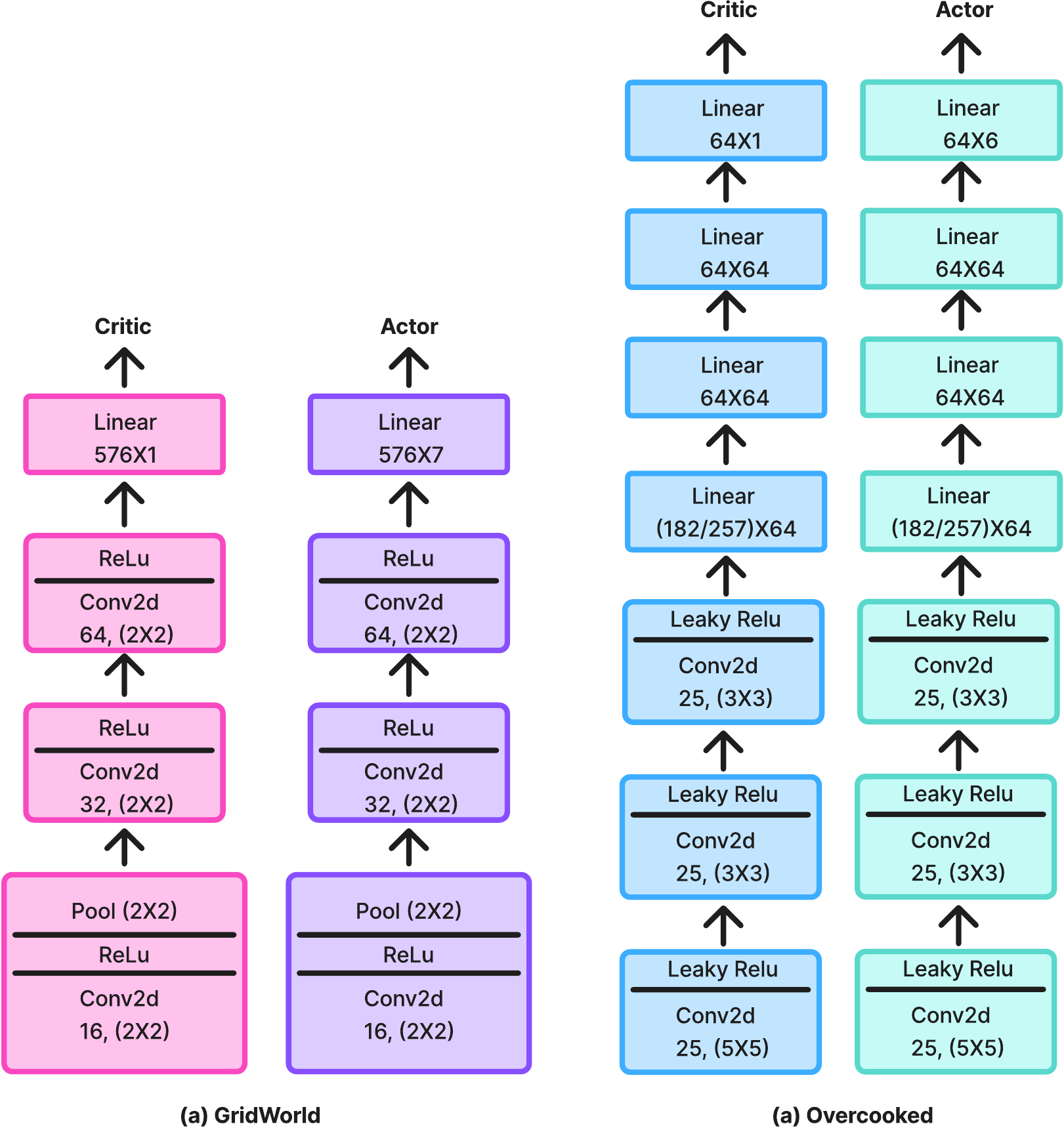}
    \caption{Actor-Critic architectures used in our experiments. (a) MiniGrid. (b) Overcooked.}
    \label{fig:model_architectures}
\end{figure}

\noindent All MiniGrid experiments share the same model architecture shown in Fig.~\ref{fig:model_architectures}a. Similarly, all Overcooked experiments use the architecture in Fig.~\ref{fig:model_architectures}b. Due to layout size differences in Overcooked, the dense layer input size is set to 182 for \textit{Simple} layouts and 257 for \textit{Ring} layouts.

\subsection{Sensitivity of Energy-Based Separation}

We evaluate whether varying the energy thresholds \( m_{\text{in}} \) and \( m_{\text{out}} \) affects the teacher’s ability to distinguish between false and true out-of-distribution (OOD) states. The energy loss used during training is defined over the energy score \( \phi(s) = -E(s) \) as:
\begin{align*}
\mathcal{L}_{\text{energy}} = & \; \mathbb{E}_{\mathbf{s}_{\text{in}} \sim \mathcal{D}_{\text{in}}^{\text{train}}} \left[ \left( \max\left(0, m_{\text{in}} - \phi(\mathbf{s}_{\text{in}}) \right) \right)^2 \right] \\
& + \mathbb{E}_{\mathbf{s}_{\text{out}} \sim \mathcal{D}_{\text{out}}^{\text{train}}} \left[ \left( \max\left(0, \phi(\mathbf{s}_{\text{out}}) - m_{\text{out}} \right) \right)^2 \right].
\end{align*}

\paragraph{Experimental Setup}
Experiments are conducted in the \textit{GridWorld (unlocked-to-locked)} environment. During training, the in-distribution (ID) set consists of the most recent 3{,}000 frames collected from the agent’s own trajectory. The out-of-distribution (OOD) set is fixed and sampled from 100 episodes of a random policy in the target environment, where the agent is randomly initialized in any room at the start of each episode to ensure unbiased state coverage (rather than being constrained to the upper room).
We evaluate six combinations of \( (m_{\text{in}}, m_{\text{out}}) \) used in the energy regularization loss (defined over energy scores \( \phi(s) = -E(s) \)):
(10, 15), (5, 10), (15, 20), (10, 10), (15, 15), (12, 14).
Each configuration is trained with 3 random seeds using a shared PPO setup and evaluated at the 800{,}000-step checkpoint.

\paragraph{Sensitivity Evaluation Protocol.}
\label{sec:energy-sensitivity}
We assess whether the teacher consistently distinguishes between \textit{false OOD} states -- those similar to ID states and where guidance should be issued -- and \textit{true OOD} states -- those clearly out-of-distribution and where guidance should be withheld. Both sets are drawn from a fixed OOD dataset collected via a random policy in the target environment. For each \( (m_{\text{in}}, m_{\text{out}}) \) configuration, we compute the divergence between the energy score distributions of false and true OOD states across three training seeds using Jensen-Shannon divergence, total variation distance, Hellinger distance, and Kullback-Leibler (KL) divergence. To evaluate sensitivity, we apply one-way ANOVA and Kruskal-Wallis tests to determine whether this separation remains consistent across different regularization settings. A high p-value indicates that the teacher’s ability to determine when to issue guidance is robust to the choice of \( (m_{\text{in}}, m_{\text{out}}) \).

\begin{table}[h]
\centering
\begin{tabular}{lcc}
\toprule
\textbf{Metric} & \textbf{ANOVA p-value} & \textbf{Kruskal--Wallis p-value} \\
\midrule
Jensen--Shannon     & 0.1138 & 0.1592 \\
Kullback--Leibler   & 0.2457 & 0.1799 \\
Total Variation     & 0.1728 & 0.2322 \\
Hellinger Distance  & 0.1247 & 0.1592 \\
\bottomrule
\end{tabular}
\caption{Statistical test results (p-values) for divergence between False OOD and True OOD energy distributions across different \( (m_{\text{in}}, m_{\text{out}}) \) settings.}
\label{tab:false-vs-true-pvals}
\end{table}

\paragraph{Results.}  
As shown in Table~\ref{tab:false-vs-true-pvals}, we observe no statistically significant variation in the separation between false and true OOD states across different \( (m_{\text{in}}, m_{\text{out}}) \) configurations. The ANOVA and Kruskal-Wallis tests yield p-values above 0.1 for all four divergence metrics, indicating that the teacher’s ability to distinguish between states where guidance should or should not be issued is stable across regularization settings.

\end{document}